\documentclass[conference]{IEEEtran}
\usepackage[utf8]{luainputenc}

\usepackage{cite}
\usepackage{amsmath}
\usepackage{amsthm}
\usepackage{amsfonts}
\usepackage{mathrsfs}
\usepackage{graphicx}
\usepackage{microtype}
\usepackage{textcomp}
\usepackage{xcolor}
\usepackage{booktabs}
\usepackage{mathtools}
\usepackage{algorithm}
\usepackage{algorithmic}
\usepackage{textcomp}
\usepackage{xurl}
\usepackage{braket}
\usepackage{balance}
\usepackage[disableredefinitions,basic]{complexity}
\RequirePackage[normalem]{ulem}
\RequirePackage{color}\definecolor{RED}{rgb}{1,0,0}\definecolor{BLUE}{rgb}{0,0,1}

\usepackage{soul,xcolor}

\DeclareMathOperator*{\argmax}{arg\,max}

\graphicspath{{./figures/}}

\newtheorem{theorem}{Theorem}
\newtheorem{lemma}{Lemma}

\newcommand{\LINEIFRETURN}[2]{%
    \STATE\algorithmicif\ {#1}\ \algorithmicthen\ \algorithmicreturn\ {#2} \algorithmicend\ \algorithmicif%
}

\begin{document}
\bstctlcite{IEEEexample:BSTcontrol}

\title{Counterfactual Explanations for Machine Learning on Multivariate Time Series Data}

\author{
   \IEEEauthorblockN{Emre Ates\IEEEauthorrefmark{1},
     Burak Aksar\IEEEauthorrefmark{1},
     Vitus J. Leung\IEEEauthorrefmark{2}, and
     Ayse K. Coskun\IEEEauthorrefmark{1}}
   \IEEEauthorblockA{
     \IEEEauthorrefmark{1}\textit{Dept. of Electrical and Computer Eng.} \\
     \textit{Boston University}\\
     Boston, MA, USA \\
     Email: \{ates,baksar,acoskun\}@bu.edu}
   \IEEEauthorblockA{
     \IEEEauthorrefmark{2}\textit{Sandia National Laboratories}\\
     Albuquerque, NM, USA\\
     Email: vjleung@sandia.gov}
}

\maketitle

\begin{abstract}
  Applying machine learning (ML) on multivariate time series data has growing popularity in many application domains, including in computer system management. For example, recent high performance computing (HPC) research proposes a variety of
  ML frameworks that use system telemetry data in the form of
  multivariate time series so as to detect performance variations,
  perform intelligent scheduling or node allocation, and improve system security.
  Common barriers for adoption for these ML frameworks include the
  lack of user trust and the difficulty of debugging. These barriers need to be
  overcome to enable the widespread adoption of ML frameworks in production
  systems.
  To address this challenge, this paper proposes a novel explainability technique for
  providing counterfactual explanations for supervised ML
  frameworks that use multivariate time series data. Proposed method outperforms state-of-the-art explainability methods on several different ML frameworks and
  data sets in metrics such as faithfulness and robustness. The paper also demonstrates how the proposed method
  can be used to debug ML frameworks and gain a better understanding
  of HPC system telemetry data.
\end{abstract}

\begin{IEEEkeywords}
  explainability, interpretability, machine learning, time series, high
  performance computing, monitoring
\end{IEEEkeywords}

\section{Introduction}


Multivariate time
series data analytics have been gaining popularity due to the recent advancements in
internet of things technologies and omnipresence of real-time sensors~\cite{Assaf:2019}. Health care,
astronomy, sustainable energy, and geoscience are some 
domains where researchers utilize multivariate time series along with machine learning based analytics to solve problems such as seismic activity forecasting, hospitalization rate prediction, and many others~\cite{Che:2018}. Large-scale computing system management has also been increasingly leveraging time series analytics for
improving performance, efficiency, or security.
For example, high performance computing (HPC)
systems produce terabytes of instrumentation data per day in the form of logs,
metrics, and traces, and HPC monitoring frameworks organize system-wide resource
utilization metrics as multivariate time series. Thousands of metrics can be
collected per node and each metric---representing different resource statistics
such as network packet counts, CPU utilization, or memory statistics---is
sampled on intervals of seconds to
minutes~\cite{Agelastos:2014,Bartolini:2018,Ates:2019}. Analyzing this data is
invaluable for management and
debugging~\cite{Bodik:2010,Ramin:2018,Agelastos:2014}, but extensive manual
analysis of these big data sets is not feasible.

Researchers have recently started using machine learning (ML)
to help analyze HPC system telemetry data and gain valuable insights.
ML methods
can process large amounts of data and, in addition, frameworks using ML methods
benefit from the flexibility of the models that generalize to different
systems and potentially previously unseen cases. ML frameworks
have been shown to diagnose performance
variations~\cite{Tuncer:2017,Tuncer:2019,Borghesi:2019,Klinkenberg:2017},
improve scheduling~\cite{Yang:2011,Xiong:2018} or improve system security by
detecting unwanted or illegal
applications on HPC systems~\cite{Ates:2018} using multivariate time series data.

While many advantages of ML are well-studied, there are also
common drawbacks that ML frameworks need to address
before they can be widely used in production. These frameworks commonly have
a taciturn nature, e.g., reporting only the final diagnosis when analyzing performance problems in HPC systems such as
``{\tt network contention on router-123},'' without providing reasoning relating
to the underlying data. Furthermore, the ML models within these
frameworks are black boxes which may
perform multiple data transformations before arriving at a classification, and
thus are often challenging to understand. The black-box nature of these frameworks
causes a multitude of drawbacks, including making it challenging to debug
mispredictions, degrading user trust, and reducing the overall usefulness of the
systems.

To address the broad ML {\em explainability} problem, a number of methods
that explain black-box classifiers have been proposed by researchers~\cite{Arya:2019}.
These methods can be
divided into {\em local} and {\em global} explanations, based on whether they
explain a single prediction or the complete classifier. Local explanations can
also be divided into {\em sample-based} explanations that provide different
samples as explanations and {\em feature-based} explanations that indicate the
features that impact the decision the most. However, most of existing
explainability methods are not designed for multivariate time series data, and
they fail to generate sufficiently simple explanations when tasked with complex multivariate time series data, such as in explaining
ML frameworks for analyzing HPC systems.

\begin{figure}[t]
  \centering
  \includegraphics[width=\columnwidth]{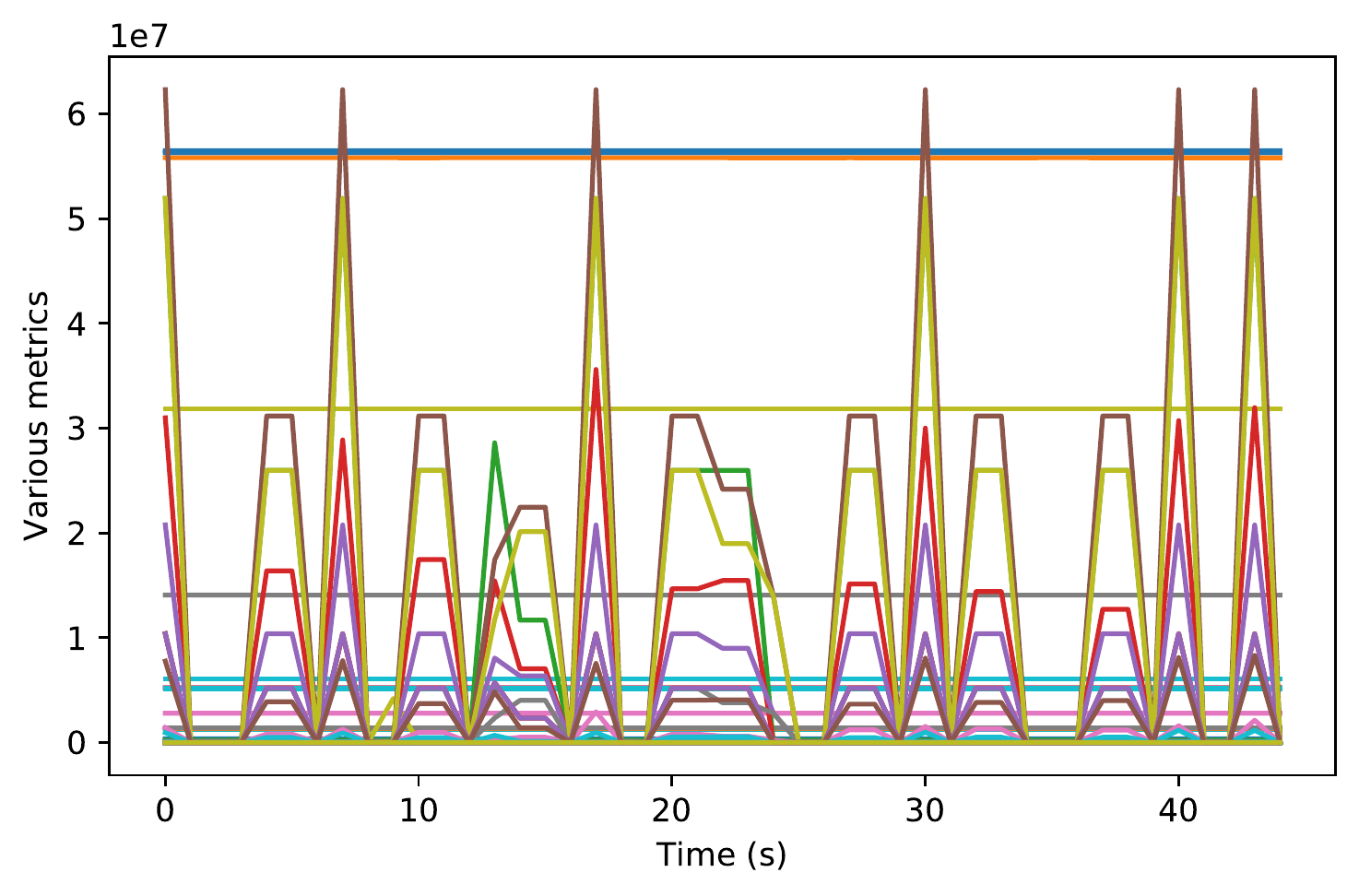}
  \vspace{-0.3in}
  \caption{A 45-second time window sample from a single node of a multi-node
    application execution on Voltrino, a Cray XC30m supercomputer with 56 nodes. Existing
    explainability techniques do not address the problem of the inherent
    complexity of HPC time series data. Sample-based techniques assume that
    users can interpret the difference between two samples and
    feature-based methods assume users understand the meaning of every feature
    (thus, they do not limit the number of features in explanations).
  }\label{fig:timeseries}
  \vspace{-0.2in}
\end{figure}

Why do existing explainability methods fail to provide satisfactory explanations
for high-dimensional multivariate time series data? One differentiating factor is the complexity of the
data. Figure~\ref{fig:timeseries} shows a {\em sample} from HPC domain, where 199 metrics are collected
through the system telemetry during an application run.
Existing sample-based methods provide samples from the training set
or synthetically generate samples~\cite{Koh:2017,Cem:2018}. These methods are designed
with the assumption that one sample is self-explanatory and users can visually distinguish
between two samples; however, providing another HPC time series sample with
hundreds of metrics similar to the one shown in Fig.~\ref{fig:timeseries} is most often not an
adequate explanation. On the other hand, existing feature-based
methods~\cite{Ribeiro:2016,Lundberg:2017} provide a set of features and expect
the users to know the meaning of each feature, as well as normal and abnormal
values for them, which is often not possible in many domains, including HPC.

In this paper, we introduce a novel explainability method for time series data
that provides {\em counterfactual} explanations for individual predictions. The
counterfactual explanations consist of hypothetical samples that are as similar
as possible to the sample that
is explained, while having a different classification label, i.e., ``if
these particular metrics were different in the given sample, the classification label
would have been different.'' The format of the counterfactual explanation is identical
to the format of the training data, but only pinpointing the metrics with the replaced  time series.
The counterfactual explanations
are generated by selecting a small number of time series
from the training set and substituting them in the sample under investigation to obtain
different classification results. In this way, end users
can understand the expected behavior by examining a limited number of substituted metrics.
These explanations can be then used to debug misclassifications, understand how the
classifier makes decisions, provide adaptive dashboards that highlight important
metrics from ongoing application runs, and extract knowledge on the nature of normal or
anomalous behavior of a system.

Our specific contributions are as follows:
\begin{itemize}
\item Demonstration of existing general-purpose explainability methods and how
  they are inadequate to explain ML frameworks that work with multivariate time series data, particularly focusing on ML frameworks for HPC system analytics~(\S~\ref{sec:eval:qual}),
\item design of a formal problem statement for multivariate time series explainability
  and a proof of \NP-hardness of the problem~(\S~\ref{sec:problem}, Appendix),
\item design of a heuristic algorithm for the time series explainability
  problem~(\S~\ref{sec:method}),
\item demonstration of the application of the proposed explainability method to several
  ML frameworks that work with multivariate data sets using four different data sets~(\S~\ref{sec:setup},~\ref{sec:eval}),
\item comparison of our method with state-of-the-art explainability methods
  using a set of novel and standard metrics. Our method generates comprehensible
  explanations for the different time series data sets we use (i.e., several HPC telemetry data sets and a motion classification data set)  and performs better than baselines in
  terms of {\em faithfulness} and {\em robustness}~(\S~\ref{sec:eval}).
\end{itemize}


\section{Related Work}

Explainability has been the topic of much research in the last few years. We use
the helpful classification of Arya et al. when discussing the existing
literature~\cite{Arya:2019}. We focus on explainable models and local sample- and
feature-based explanations for black-box models.

\textbf{Explainable models}
learn a model that is inherently understandable by untrained
operators. They include simple models like logistic regression and decision trees
and newer models like CORELS, which learns minimal rule lists that are
easy to understand~\cite{Angelino:2017}. Our experiences with using CORELS with
HPC time series data in Sec.~\ref{sec:eval:explainable} show that
CORELS fails to learn a usable model, and decision trees using HPC data become
too complex to be understood by human operators.

\textbf{Sample-based explanations}
provide samples that explain decisions by giving supportive or counter examples,
or prototypes of certain classes.
Koh and Liang use influence functions to find training set samples that
are most impactful for a specific decision~\cite{Koh:2017}. Contrastive
explanations method (CEM) provides a synthetic
sample with a different classification result that is as
similar to the input sample as possible, while using autoencoders to ensure the
generated sample is realistic~\cite{Cem:2018}. Sample-based methods do not
address the inherent complexity of multivariate time series data, since they focus on tabular data
or images where the test sample and the explanation sample are easy to compare
manually. Data with thousands of time series per sample is
challenging for users to compare and contrast without additional computing.

\textbf{Feature-based explanations}
highlight certain features that are impactful during
classification. Global feature-based explanations for some
classifiers such as random forests and logistic regression use the learned
weights of the classifiers~\cite{Palczewska:2014}. Local feature-based
explanations include LIME, which fits a linear model to classifier decisions for
samples in the neighborhood of the sample to be explained~\cite{Ribeiro:2016}.
SHAP derives additive Shapley values for each feature to represent the impact
of that feature~\cite{Lundberg:2017}. These feature-based models do not support
using time series directly; however, many ML frameworks use feature transformations
and feature-based explanations can explain these models' classifications in terms of
the features they use. In this case, bridging the gap between feature-based
explanations and the complete framework is left to the user. This task can
become unfeasible for untrained users since complex features can be used
such as kurtosis or B-splines~\cite{Endrei:2018}.

\textbf{Time series specific explanations}
have also been foci of research. Schegel et al. evaluate different general
purpose explainability methods on time series~\cite{Schegel:2019}. Gee et al.
propose a method to learn
prototypes from time series~\cite{Gee:2019}. All of these methods operate
on univariate time series and do not address the problem of
explaining multivariate time series. Assaf and Roy propose a method
to extract visual saliency maps for multivariate time series~\cite{Assaf:2019};
however, their method is specific to deep neural networks. Furthermore, saliency
maps lose their simplicity when scaled to hundreds or thousands of time series.

\textbf{Counterfactual explanations}
have been used to explain ML models. Wachter et al. are among the first to use
the term ``counterfactual'' for ML explanations~\cite{Wachter:2017}.
Counterfactual explanations have also been
used in the domains of image~\cite{Goyal:2019}, document~\cite{Martens:2014} and
univariate time series~\cite{Karlsson:2018} classification. DiCE is an open
source counterfactual explanation method for black-box
classifiers~\cite{Mothilal:2020}. To the best of our knowledge, there are no
existing methods to generate counterfactual explanations for high-dimensional
multivariate time series.





\section{Counterfactual Time Series Explanation Problem}\label{sec:problem}
Our goal is to provide counterfactual explanations for ML methods
that operate on time series data. We define the {\em counterfactual time
series explanation problem} as follows. Given a black-box ML
framework that takes
multivariate time series as input and returns class probabilities, the
explanations show which time series need to be modified, and how, to change the
classification result of a sample in the desired way, e.g., ``if {\tt MemFree::meminfo} (a feature in an HPC performance analytics framework) was
not decreasing over time, this run would not be classified as a memory leak (an anomaly affecting performance of an HPC system).'' For a given
sample and a class of interest, our counterfactual explanation finds a small
number of substitutions to the sample from a {\em distractor}\footnote{
The distractor is a sample chosen from the training set that our methods
``distracts'' the classifier with, resulting in a new classification result.}
chosen from the
training set that belongs to the class of interest, such that the resulting sample is
predicted as the class of interest. We assume a black box model for the classifier,
thus having no access to the internal weights or gradients. We next define
our problem formally.

\subsection{Problem Statement}

In this paper, we represent multivariate time series classification models using $f(x) = y :
\mathbb{R}^{m \times t} \rightarrow \mathbb{R}^k$ where the model $f$ takes $m$
time series of length $t$ and returns the probability for $k$ classes. We use
the shorthand $f_c(x)$ as the probability for class $c \in [1, k]$. Our goal is
to find the {\em optimum counterfactual explanation} for a given test sample
$x_{test}$ and class of interest $c$. We define an optimum counterfactual
explanation as a modified sample $x'$ that is constructed using $x_{test}$ such
that $f_c(x')$ is maximized. The class of modifications that we consider to
construct $x'$ are substitutions of entire time series from a distractor sample
$x_{dist}$, chosen from the training set, to $x_{test}$. Our second objective is
to minimize the number of substitutions made to $x_{test}$ in order to obtain
$x'$.

The optimum counterfactual explanation can be constructed by finding $x_{dist}$
among the training set and $A$ which minimizes
\begin{equation}\label{eqn:loss}
L \left( f, c, A, x' \right) = \left( 1 - f_c(x') \right)^2 + \lambda ||A||_1,
\end{equation}
where
\begin{equation}\label{eqn:xprime}
  x' = (I_m - A) x_{test} + A x_{dist},
\end{equation}
$\lambda$ is a tuning parameter, $I_m$ is the $m \times m$ identity matrix, and
$A$ is a binary diagonal matrix where $A_{j,j} = 1$ if metric
$j$ of $x_{test}$ is going to be swapped with that of $x_{dist}$, 0 otherwise.

We prove in the Appendix that the problem of finding a counterfactual
explanation, $x_{dist}$ and $A$, that maximize $f_c(x')$ is \NP-hard. Because of
this, it is unlikely that a polynomial-time solution for our explainability
problem exists; therefore, we focus on designing approximation algorithms and
heuristics that generate acceptable explanations in a practical duration.

\subsection{Rationale for Chosen Explanation}
Explainability techniques targeting multivariate time series frameworks need to consider several
properties that general-purpose explainability techniques do not
consider. In most domains, such as HPC, time series data is more complex than traditional machine learning
data sets by several aspects. A single sample is most often not explainable because of the
volume of data contained, as shown in Fig.~\ref{fig:timeseries}. Therefore,
sample-based methods often fail to provide comprehensible explanations. Furthermore,
each metric in the time series
requires research to understand. For example, in an HPC telemetry system, 
performance counters with the same
name can have different meanings based on the underlying CPU model. Furthermore, the values
of metrics may not be meaningful without comparison points. In order to address
these challenges, our explainability approach is {\em simultaneously} sample- and
feature-based. We provide a counterfactual sample from the training set, and
indicate which of the many time series in the sample need to be modified to have
a different classification result. This results in an explanation that is easy
to understand by human operators, since it requires interpreting only a minimal
number of metrics. To help users interpret metric values, we provide
examples of the same metric for both the distractor and $x_{test}$.

Many existing explainability techniques rely on synthetic data generation either as
part of their method or as the end result~\cite{Ribeiro:2016,Cem:2018,Lundberg:2017,Wachter:2017}.
These explanations assume that the synthetic modifications lead to meaningful
and feasible time series. Generating synthetic time series data is challenging for a domain like HPC performance analytics because
many of the time series collected through the system telemetry 
represent resource utilization values that
have constraints; e.g., the rate of change of certain performance counters and the
maximum/minimum values of these counters are bounded by physical constraints of the CPUs, servers, and other components in the HPC system.

In our method, we choose $x_{dist}$ from the training set as part of the
explanation, in contrast to providing synthetic samples, which guarantees that the time
series in the explanation are feasible and
realistic, because they were collected from the same system. Choosing a distractor
$x_{dist}$ from the training set also enables administrators to inspect the logs
and other information besides time series that belong to the sample.

We keep the number of distractors to 1, instead of substituting individual time
series from various distractors, in order to guarantee a possible solution. As
long as $\argmax_{j \in [1, k]} f_j(x_{dist}) = c$, a solution with $||A||_1
\leq m$ exists. Furthermore, in cases where administrators need to inspect logs
or other related data, keeping the distractor count small helps improve the
usability of our method.


\section{Our Method: Counterfactual Explanations}\label{sec:method}
What are some algorithms that can be used to obtain counterfactual
explanations? We present a greedy search algorithm that generates
counterfactual explanations for a black-box classifier and a faster optimization of
this algorithm.

As we described in Sec.~\ref{sec:problem}, our goal is to find
counterfactual explanations for a given test sample $x_{test}$. Recall that a
counterfactual explanation is a minimal modification to $x_{test}$ such that the
probability of being part of the class of interest is maximized. Our method
aims to find the minimal number of time series substitutions from the chosen
distractor $x_{dist}$ instance that will flip the prediction.

We relax the loss function $L$ in Eqn.~\eqref{eqn:loss}, using
\begin{equation}\label{eqn:modloss}
  L \left( f, c, A, x' \right) = \left( (\tau - f_c(x'))^+ \right)^2 + \lambda (||A||_1 - \delta)^+,
\end{equation}
where $x'$ is defined in Eqn.~\eqref{eqn:xprime}, $\tau$ is the target probability
for the classifier, $\delta$ is the desired number of features in an explanation
and $x^+ = \max(0, x)$, which is the rectified linear unit (ReLU). ReLU is used
to avoid penalizing explanations shorter than $\delta$. Running optimization
algorithms until $f_c(x')$ becomes 1 is usually not feasible and the resulting
explanations do not significantly change; thus, we empirically set $\tau = 0.95$.
We set $\delta = 3$, as it is shown to be a suitable number of features in an
explanation~\cite{Miller:2018}.

Our explainability method operates by choosing multiple distractor candidates
and, then, finding the best $A$ for each distractor. Among the different $A$
matrices, we choose the matrix with the smallest loss value. We present our
method for choosing distractors, and two different algorithms for choosing
matrix $A$ for a given distractor.

\begin{algorithm}[b]
\caption{Sequential Greedy Search}\label{alg:greedy}
\begin{algorithmic}[1]
\REQUIRE Instance to be explained $x_{test}$, class of interest $c$, model $f$,
distractor $x_{dist}$, stopping condition $\tau$
\REQUIRE $f_c(x_{dist}) \geq \tau$
\ENSURE $f_c(x') \geq \tau$
\STATE $AF \leftarrow 0_{m\times m}$ \COMMENT{$AF$ is final $A$}
\LOOP
\STATE $x' \leftarrow (I_m - AF)x_{test} + AFx_{dist}$
\STATE $p \leftarrow f_c(x')$
\LINEIFRETURN{$p \geq \tau$}{$AF$}
\FOR{$i \in [0, m]$}
\STATE $A \leftarrow AF$
\STATE $A_{i,i} = 1$
\STATE $x' \leftarrow (I_m - A)x_{test} + Ax_{dist}$
\STATE improvement $\leftarrow f_c(x') - p$
\ENDFOR
\STATE Set $AF_{i,i} = 1$ for $i$ that gives best improvement
\ENDLOOP
\end{algorithmic}
\end{algorithm}

\subsection{Choosing Distractors}

After finding the best $A$ for each $x_{dist}$, we
return the best overall solution as the explanation. As we seek to find the
minimum number of substitutions, it is intuitive to start with distractors
that are as similar to the test sample as possible. Hence, we use the $n$
nearest neighbors of $x_{test}$ in the
training set that are correctly classified as the class of interest for the
distractor. Especially for data sets where samples of the same class can have
different characteristics, e.g., runs of different HPC applications undergoing the
same type of performance anomaly, choosing a distractor similar to $x_{test}$
would intuitively yield minimal and meaningful explanations.

To quickly query for nearest neighbors, we keep all correctly classified
training set instances in a different {\em KD-Tree} per class. The number of
distractors to try out is given by the user as an input to our algorithm, depending
on the running time that is acceptable for the user. If the number of training
instances is large, users may choose to either randomly sample or use
algorithms like $k$-means to reduce the number of training instances before
constructing the KD-tree. The distance measure we use is Euclidean distance, and
we use the KD-tree implementation in scikit-learn~\cite{scikit-learn}.

\subsection{Sequential Greedy Approach}\label{sec:greedy}

The greedy algorithm for solving the hitting set problem is shown to have an
approximation factor of $log_2|U|$, where $U$ is the union of all the
sets~\cite{Chandrasekan:2011}. Thus, one algorithm
we use to generate explanations is the {\em Sequential Greedy Approach}, shown
in Algorithm~\ref{alg:greedy}. We replace each feature
in $x_{test}$ by the corresponding feature from $x_{dist}$. In each iteration, we
choose the feature that leads to the highest increase in the prediction
probability. After we replace a feature in $x_{test}$, we continue the greedy
search with the remaining feature set until the prediction probability exceeds
$\tau$, which is the predefined threshold for the probability values.

\begin{algorithm}[b]
\caption{Random Restart Hill Climbing}\label{alg:climbing}
\begin{algorithmic}[1]
\REQUIRE Instance to be explained $x_{test}$, class of interest $c$, model $f$,
distractor $x_{dist}$, loss function $L(f, c, A, x')$, max attempts, max iters
\FOR{$i \in [0, num_{restarts}]$}

\STATE Randomly initialize $A$; attempts $\leftarrow 0$; iters $\leftarrow 0$
\STATE $x' \leftarrow (I_m - A)x_{test} + Ax_{dist}$
\STATE $l \leftarrow L(f, c, A, x')$

\WHILE{attempts $\leq$ max attempts \textbf{and} iters $\leq$ max iters}
\STATE iters$++$
\STATE $A_{tmp} \leftarrow$ RandomNeighbor($A$)
\STATE $x' \leftarrow (I_m - A_{tmp})x_{test} + A_{tmp}x_{dist}$
\IF{$L(f, c, A_{tmp}, x') \leq l$}
\STATE attempts $\leftarrow 0;$ $A \leftarrow A_{tmp};$ $l \leftarrow L(f, c, A,
x')$
\ELSE
\STATE attempts$++$
\ENDIF
\ENDWHILE
\ENDFOR

\end{algorithmic}
\end{algorithm}

\subsection{Random-Restart Hill Climbing}

Although the greedy method is able to find a minimal set of explanations, searching for the best
explanation by substituting the metrics one by one can become slow for data sets
with many metrics. For a faster algorithm, we apply derivative-free optimization algorithms
to minimize the loss $L$ (in Eqn.~\eqref{eqn:modloss}).

For optimizing running time, we use a hill-climbing optimization method, which
attempts to iteratively improve the current state by choosing the best successor
state under the evaluation function. This method does not construct a search
tree to search for available
solutions and instead it only looks at the current state and possible states in the
near future~\cite{Russell:2009}. It is easy for hill-climbing to settle in
local minima, and one easy modification is {\em random restarting}, which
leads to a so-called {\em Random Restart Hill-Climbing}, shown in
Algorithm~\ref{alg:climbing}.

This algorithm starts with a random initialization point for $A$, and evaluates
$L$ for random neighbors of $A$ until it finds a better neighbor. If a better
neighbor is found, the search continues from the new $A$. In our implementation, we
use the Python package {\tt mlrose}~\cite{Hayes:2019}.

In some cases, hill climbing does not find a viable set $A$ that increases the target
probability. We check for this possible scenario by pruning the output, i.e.,
removing metrics that do not impact target probability. Then, if no metrics
are left, we use greedy search~(\S~\ref{sec:greedy}) to find a viable solution.

\subsection{How to Measure Good Explanations}\label{sec:metrics}
The goal of a local explanation is to provide more information to human
operators to let them understand a particular decision made by an ML
model, learn more about the model, and hypothesize about the future
decisions the model may make. However, in analogy to the Japanese movie Rashomon,
where characters provide vastly different tellings of the same incident, the same
classification can have many possible explanations~\cite{Leo:2001}. Thus, it is necessary to
choose the best one among possible explanations.


There is no consensus on metrics for comparing explainability methods in
academia~\cite{Lipton:2017,Schmidt:2019}. In this work, we aim to provide several
tenets of good explanations with our explainability method.

\textbf{Faithfulness to the original model:}
An explanation is faithful to the classifier if it reflects the actual reasoning
process of the model. It is a first-order requirement of any explainability
method to accurately reflect the decision process of the classifier and not
mislead users~\cite{Ribeiro:2016}. However, most of the time it is challenging
to understand the actual reasoning of complicated ML models. To test the
faithfulness of our method, we explain a simple model with a known reasoning
process and report the precision and recall of our explanations.

\textbf{Comprehensibility by human operators:}
Understanding an explanation should not require specialized knowledge about
ML. According to a survey by Miller~\cite{Miller:2018}, papers
from philosophy, cognitive psychology/science and social psychology should be
studied by explainable artificial intelligence researchers. In the same survey,
it is stated that humans prefer only 1 or 2 causes instead of an explanation
that covers the actual and full list of causes. This is especially important for multivariate
data sets such as
HPC time series data, since each time series represents a different metric and
each metric typically requires research to understand the meaning. Thus, to
evaluate comprehensibility, we compare the {\em number of time series} that are
returned in explanations by different explainability methods.

\textbf{Robustness to changes in the sample:}
A good explanation would not only explain the given sample, but provide similar
explanations for similar samples~\cite{Alvarez:2018,Alvarez:2018a}, painting a
clearer picture in the minds of human operators. Of course, if similar samples
cause drastic changes in model behavior, the explanations should also reflect this.
A measure that have been used to measure robustness is the {\em local Lipschitz
constant} $\mathcal{L}$~\cite{Alvarez:2018},
which is defined as follows for a given $x_{test}$ instance:
\begin{equation}\label{eqn:lipschitz}
\mathcal{L}(x_{test}) = \underset{ x_j \in \mathcal{N}_{k}(x_{test})
  }{\max} \frac{\lVert \xi(x_{test}) - \xi(x_j) \rVert_{2}
  }{ \lVert x_{test} - x_j \rVert_2},
\end{equation}
where $\xi(x)$ is the explanation for instance $x$, and
$\mathcal{N}_{k}(x)$ is the $k$-nearest neighbors of $x_{test}$ in the training set.
We use nearest neighbors, instead of randomly generated samples,
because it is challenging to generate realistic random time series.
The maximum constant is chosen because the explanations should be robust against
the worst-case. Intuitively, the Lipschitz constant measures the ratio of change
of explanations to changes in the samples. We change explanations to $1\times m$
binary matrices (1 if metric is in explanation, 0 otherwise) to be able to subtract them.

\textbf{Generalizability of explanations:}
Each explanation should be generalizable to similar samples;
otherwise, human operators using the explanations would not be able to gain an
intuitive understanding of the model. Furthermore, for misclassifications, it is
more useful for the explanations to uncover classes of misclassifications
instead of a single mishap.

We measure generalizability by applying an explanation's substitutions to other
samples. If the same metric substitutions from the same distractor can flip the
prediction of other samples, that means the explanation is generalizable.


\section{Experimental Setup}\label{sec:setup}
This section describes the time series data sets and ML frameworks we
use to evaluate our explainability method as well as the baseline
explainability methods we implement for comparisons.

\subsection{Data Sets}
We use four high-dimensional multivariate time series data sets:
three HPC system telemetry data sets and a motion classification data set.

For all data sets, we normalize the data such that each time series is between 0
and 1 across the training set. We use the same normalization parameters for the
test set. We use normalized data to train classifiers, and provide normalized
data to the explainability methods. However, the real values of metrics are
meaningful to users (e.g., CPU utilization \%), so we provide un-normalized data
in the explanations given to users and our figures in the paper.

\textbf{HPAS data set:} We use the HPC performance anomaly suite (HPAS)~\cite{Ates:2019}
to generate synthetic performance anomalies on HPC applications and
collect time series data using LDMS~\cite{Agelastos:2014}. We run our
experiments on Voltrino at Sandia National Laboratories, a 24-node Cray XC30m
supercomputer with 2 Intel Xeon E5-2698 v3 processors and 125 GB of memory per
node~\cite{voltrino}.
We run Cloverleaf, CoMD, miniAMR, miniGhost, and miniMD from the Mantevo
Benchmark Suite~\cite{Heroux:2009}, proxy applications Kripke~\cite{kunen:2015}
and SW4lite~\cite{SW4lite}, and MILC which represents part of the codes written
by the MIMD Lattice Computation collaboration~\cite{MILC}. We run each
application on 4 nodes, with and without anomalies. We use the cpuoccupy,
memorybandwidth, cachecopy, memleak, memeater and netoccupy anomalies from HPAS.

Each sample has 839 time series, from the \texttt{/proc} filesystem and
Cray network counters. We take a total of 617 samples for our data set, and we 
divide this into 350 training samples and 267 test samples. One sample
corresponds to the data collected from a single node of an application run.
After this division, we extract 45 second time windows with 30 second overlaps
from each sample.

\textbf{Cori data set:} We collect this data set from Cori~\cite{cori} to test our explainability
method with data from large-scale systems and real applications. The goal of this data set
is to use monitoring data to classify applications. Cori is a Cray XC40
supercomputer with 12,076 nodes. We run our applications in
compute nodes with 2 16-core Intel Xeon E5-2698 v3
processors and 128 GB of memory. We run 6 applications on 64 nodes for 15-30 minutes.
The applications are 3 real applications,
LAMMPS~\cite{LAMMPS}, a classical molecular dynamics code with a focus on
materials modeling, QMCPACK~\cite{QMCPACK}, an open-source continuum quantum
Monte Carlo simulation code, HACC~\cite{HACC}, an open-source code uses N-body
techniques to simulate the evolution of the universe; 2 proxy applications, NEKBone
and miniAMR from ECP Proxy Apps Suite~\cite{Proxy}; and HPCG~\cite{HPCG} benchmark
which is used to rank the TOP500 computing systems.

We collect a total of 9216 samples and we divide this into 7373 training
and 1843 test samples.
Each sample represents the data collected from a single node of an application
run and has 819 time series collected using LDMS from the \texttt{/proc}
filesystem and PAPI~\cite{papi_counters} counters.

\textbf{Taxonomist data set:} This data set, released by Ates et
al.~\cite{taxonomist_artifact}, was collected from Voltrino, a Cray
XC30m supercomputer, using LDMS. The data set contains runs of 11 different
applications with various input sets and configurations, and the goal is again to classify
the different applications.

We use all of the data, which has 4728 samples. We divide it into 3776 training
samples and 952 test samples. Each sample has 563 time series. Each sample
represents the data collected from a single node of an application run.

\textbf{NATOPS data set:} This data set is from the motion classification
domain, released by Ghouaiel et al.~\cite{natops}\footnote{We use the version
found in UCR time series classification
repository~\cite{timeseriesclassification}}. We chose this data set because of
the relatively high number of time series per sample, compared to other time
series data sets commonly used in the ML domain.

The NATOPS data contains a total of 24 time series representing the X, Y and Z
coordinates of the left and right hand, wrist, thumb and elbows, as captured by
a Kinect 2 sensor. The human whose motions are recorded repeats a set of 6 Naval Air
Training and Operating Procedures Standardization (NATOPS) motions meaning ``I
have command,'' ``All clear,'' ``Not clear,'' ``Spread wings,'' ``Fold wings,''
and ``Lock wings.'' We keep the original training and test set of 180 samples
each, with 50 second time windows.

\subsection{Machine Learning Techniques}
We evaluate our explainability techniques by explaining 3 different ML
pipelines that represent different anlaytics frameworks proposed by researchers.

\textbf{Feature Extraction + Random Forest:}
This technique represents a commonly used pipeline to classify time series
data for failure prediction, diagnose performance variability, or classify
applications~\cite{Tuncer:2017,Tuncer:2019,Ates:2018,Klinkenberg:2017,Nie:2018}.
For example, Tuncer et al.~\cite{Tuncer:2017} diagnose performance anomalies at
runtime by collecting time series data with different types of anomalies, and
train a random forest to classify the type, or absence, of anomalies using
statistical features extracted from time series.

This method is not explainable because the random forests produced can be very
complex. For example, the random forest we trained with the HPAS data set had
100 trees and over 50k nodes in total. Operators that try to understand a prediction
without explainability methods would have to inspect the decision path through
each decision tree to understand the mechanics of the decision, and
understanding high-level characteristics such as ``how can this misclassification
be fixed?'' is near-impossible without explainability techniques.

We extract 11 statistical features including the minimum, maximum, mean, standard
deviation, skew, kurtosis, 5\textsuperscript{th}, 25\textsuperscript{th},
50\textsuperscript{th}, 75\textsuperscript{th} and 95\textsuperscript{th}
percentiles from each of the time series. Then, we train scikit-learn's random
forest classifier based on these features~\cite{scikit-learn}.

\textbf{Autoencoder:}
Borghesi et al. have proposed an
autoencoder architecture for anomaly detection using HPC time series
data~\cite{Borghesi:2019,Borghesi:2019a}. The autoencoder is trained using only
``healthy'' telemetry
data, and it learns a compressed representation of this data. At runtime, data
is reconstructed using the autoencoder and the mean error is measured. A high
error means the new data deviates from the learned ``healthy'' data; thus, it can
be classified as anomalous. We implement the architecture described by Borghesi
et al. and use it for our evaluation. In order to convert the mean error, which
is a positive real number, to class probabilities between 1 and 0, we subtract
the chosen threshold from the error and use the sigmoid function. This
autoencoder model is a deep neural network, and deep neural networks are known
to be one of the least explainable ML methods~\cite{Gunning:2017}.

\textbf{Feature Extraction + Logistic Regression:}
The logistic regression classifier is inherently interpretable, so we use this
pipeline for sanity checks of our explanations in experiments where we need a
ground truth for explanations. For input feature vector $x$ the logistic
regression model we use calculates the output $y$ using the formula:
\begin{equation*}
y = S(w \cdot x),
\end{equation*}
where $S(z) = \frac{1}{1 + e^{-z}}$ is the sigmoid function.
Thus, the classifier only learns the weight vector $w$ during
training\footnote{Other formulations of logistic regression include a $b$ term
  such that $y = S(w \cdot x + b)$, but we omit this for better
  interpretability.}.
Furthermore, it is possible to deduce that any feature $x_i$ for which the
corresponding weight $w_i$ is zero has no effect on the classification.
Similarly, features can be sorted based on their impact on the classifier decision
using $|w_i|$. We use the same features as the random forest pipeline.

\subsection{Baseline Methods}
We compare our explainability method with popular explainability methods,
LIME~\cite{Ribeiro:2016}, SHAP~\cite{Lundberg:2017}, as well as Random, which
picks a random subset of the metrics as the explanation.

\subsubsection{LIME}
LIME stands for local interpretable model-agnostic explanations~\cite{Ribeiro:2016}.
LIME operates by fitting an interpretable linear model to the classifiers predictions
of random data samples. The samples are weighted based on
their distance to the test sample, which makes the explanations local. When
generating samples, LIME generates samples within the range observed in the
training set. In our
evaluation, we use the open-source LIME implementation~\cite{lime_package}.

LIME does not directly apply to time series as it operates by sampling the classifier
using randomly generated data. Randomly generating complex multivariate time series data such as HPC telemetry data while
still obeying the possible constraints in the data as well as maintaining
representative behavior over time is a challenging open problem.
In our evaluation, we apply LIME to frameworks that perform feature extraction,
and LIME interprets the classifier that takes features as input.

Another challenge with LIME is that it requires the number of features
in the explanation as a user input. Generally, it is hard for users to know
how many features in an explanation are adequate. In our experiments, we
use the number of metrics in our method's explanation as LIME's input.

\begin{figure}[tb]
  \centering
  \includegraphics[width=\columnwidth]{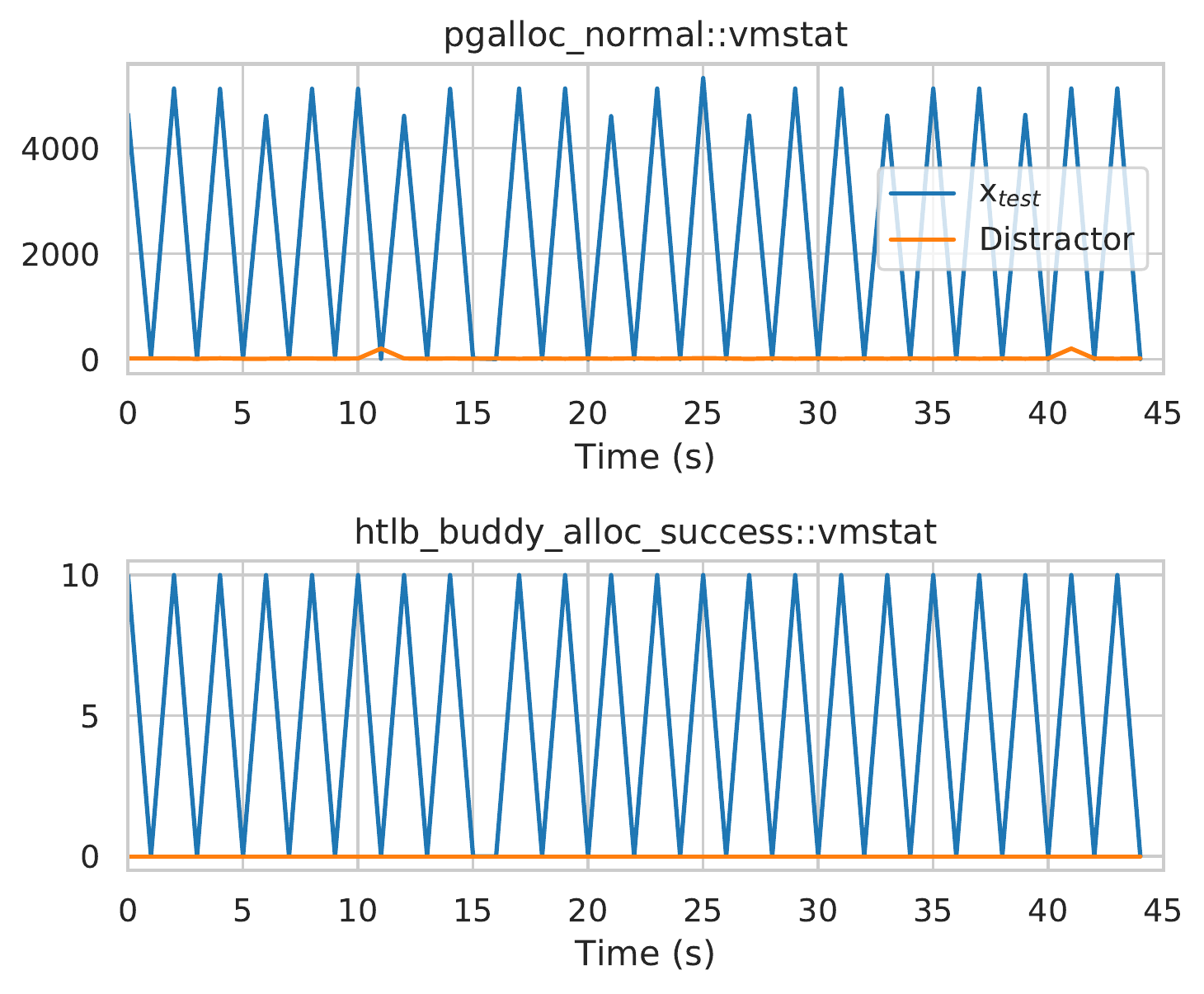}
  \vspace{-0.3in}
  \caption{The explanation of our method for correctly  classified  time  window
    with  the  ``memleak" label. Our method provides two metrics as an
    explanation to change classification label from ``memleak'' to ``healthy.''
The metrics are shown in the $y$-axes and the metric names are above the
    plots. The first metric indicates that the classifier is looking for
    repeated memory allocations in runs with memory leak. The second metric
    indicates that the classifier is looking for repeated successful huge page
    allocations in runs with memory leak. The second metric may indicate that
    the data set we use is biased and does not include runs with high memory
    fragmentation.
  }\label{fig:our_method}
  \vspace{-0.2in}
\end{figure}

\subsubsection{SHAP}
The Shapley additive explanations (SHAP), presented by Lundberg and
Lee~\cite{Lundberg:2017}, propose 3 desirable characteristics of explanations:
local accuracy, missingness, and consistency. They define additive SHAP
values, i.e., the importance values can be summed to arrive at the classification.
SHAP operates by calculating feature importance values by using model parameters;
however, since we do not have access to model parameters, we use KernelSHAP
which estimates SHAP values without using model weights.

We use the open-source KernelSHAP implementation~\cite{shap_package}, which we refer
to as SHAP in the remainder of the paper. SHAP also suffers from one of the
limitations of LIME; it is not directly applicable to time series, so we apply SHAP
to frameworks that perform feature extraction. SHAP
does not require the number of features in the explanation as an input.


\section{Evaluation}\label{sec:eval}
In this section, we evaluate our explainability method and compare it with other
explainability methods based on qualitative comparisons and the
metrics described in Sec.~\ref{sec:metrics}. We aim to answer several questions:
(1) Are the explanations minimal? (2) Are the explanations
faithful to the original classifier? (3) Are the explanations robust, or do we
get different explanations based on small perturbations of the input? (4) Are
the explanations generalizable to different samples? (5) Are the explanations
useful in understanding the classifier?

\begin{figure}[tb]
  \centering
  \includegraphics[width=\columnwidth]{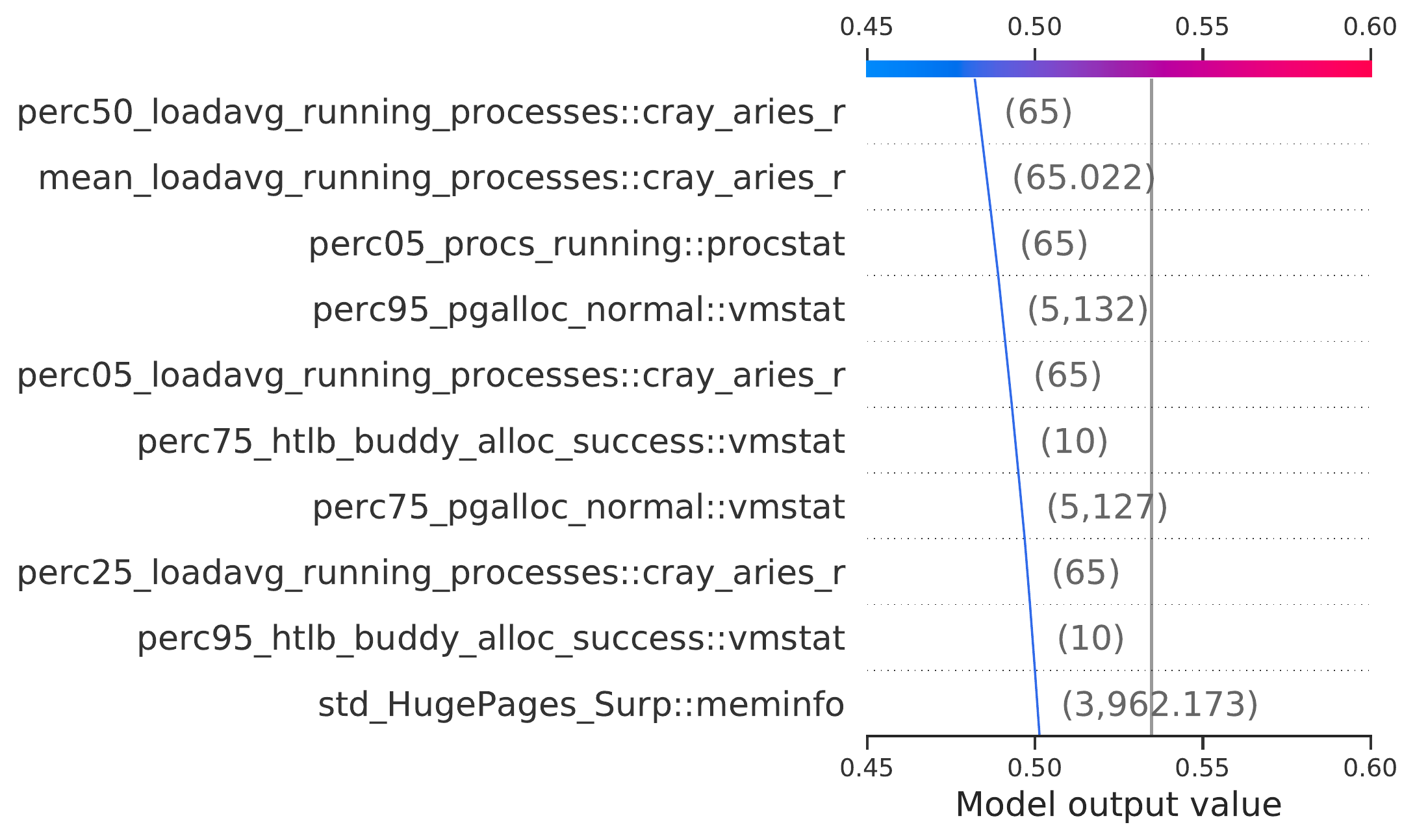}
  \vspace{-0.2in}
  \caption{The explanation of SHAP for a correctly  classified  time  window
    with  the  ``memleak'' label. SHAP provides 187 features (only 10 shown), where non-zero SHAP
    values are expected to explain the characteristics of the ``memleak" anomaly.
    It is very challenging to understand how many
    features are sufficient for an explanation, whether these features are
    relevant to ``memleak'' or other anomalies, or how to interpret the feature
    values in the explanation.
  }\label{fig:shap}
  \vspace{-0.1in}
\end{figure}

\subsection{Qualitative Evaluation}\label{sec:eval:qual}
Our first-order evaluation is to use our explanation technique and the baselines
to explain a realistic classifier. Similar to the framework proposed by Tuncer et
al.~\cite{Tuncer:2017,Tuncer:2019}, we use the random forest classifier with
feature extraction and the HPAS dataset, which includes different types of performance
anomalies. We choose ``memleak'' anomaly from HPAS data set, which makes increasing
memory allocations without freeing to mimic memory leakage. Our goal is to better
understand the classifier's understanding of the ``memleak'' anomaly. After training the random
forest pipeline, we choose a correctly
classified time window with the memleak label as $x_{test}$, and the ``healthy''
class as the class of interest. We run our method,
LIME, and SHAP with the same $x_{test}$ and compare the results.

Our explanation contains two time series, and is shown in
Fig.~\ref{fig:our_method}. The first metric to be substituted is {\tt
pgalloc\_normal} from {\tt /proc/vmstat}, which is a counter that
represents the number of page allocations. Because of our preprocessing, the
plot shows the number of page allocations per second. It is immediately clear
that the nodes with memory leaks perform many memory allocations and act in a
periodic manner.

The second metric in Fig.~\ref{fig:our_method} is {\tt
htlb\_bud\-dy\-\_\-al\-loc\-\_\-suc\-cesses}, which also belongs to the same time window.
This metric shows the number of successful huge page allocations.
Memory leaks do not need to cause huge page allocations, since memory leaks in
a system with fragmented memory might cause failed
huge page allocations. This indicates that our training set is biased
towards systems with less fragmented memory, most probably because our
benchmarks are all short-lived. 

The SHAP explanation, in Fig.~\ref{fig:shap}, contains 187 features with very
similar SHAP values. Even though
we can sort the features by importance, it is difficult to decide how many
features are sufficient for a good explanation. Also, SHAP provides a single
explanation for one sample, regardless of which class we are interested in, so
the most
important features are features that are used to differentiate this run from
other CPU-based anomalies, which may not be relevant if our goal is to understand
memory leak characteristics. Finally, it is left to the user to interpret the
values of different features, e.g., the 75\textsuperscript{th} percentile of
{\tt pgalloc\_normal} was 5,127; however, this does not
inform the user of normal values for this metric, or whether it was too high or
too low.

\begin{figure}[tb]
  \centering
  \includegraphics[width=\columnwidth]{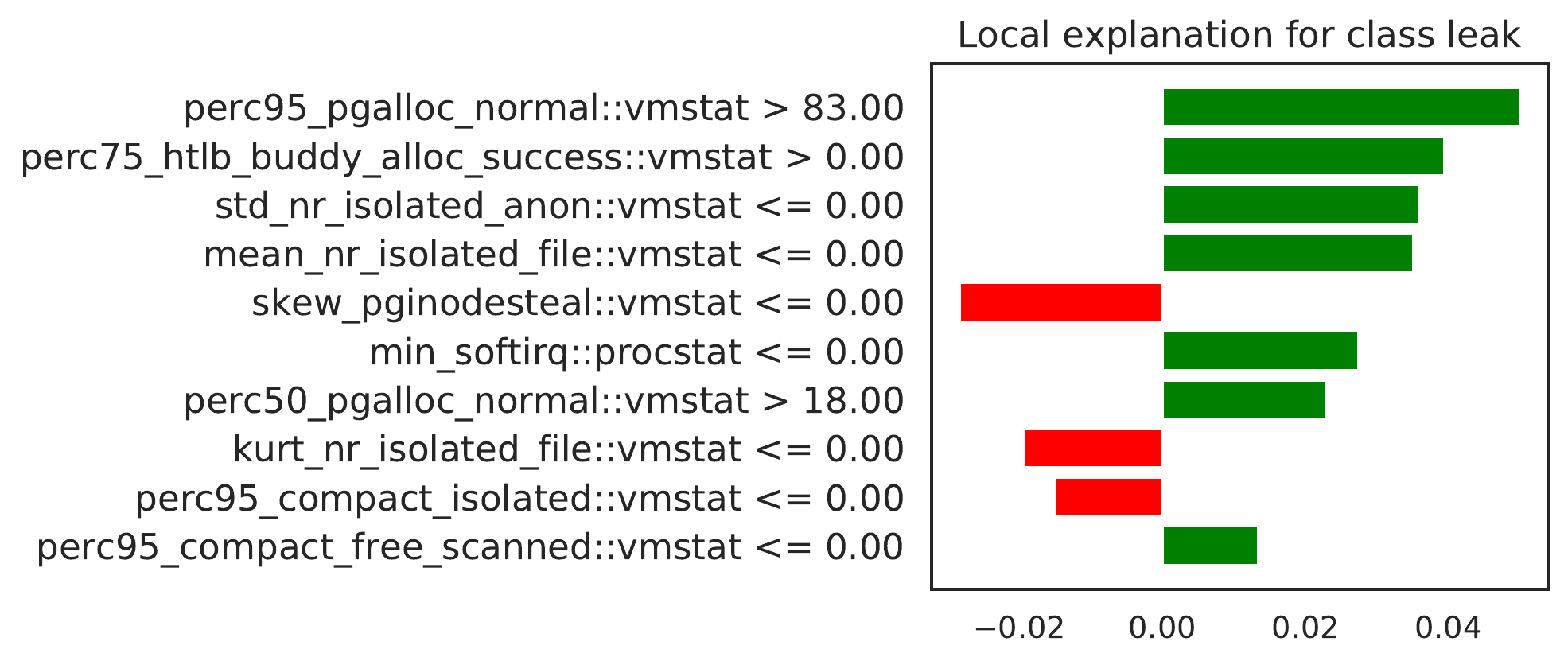}
  \vspace{-0.3in}
  \caption{The explanation of LIME for correctly classified time window with the
    ``memleak'' label. LIME provides features that positively (green) and
    negatively (red) affect the decision. Although the first two features are
    derived from the metrics in our explanation, it is not straightforward to
    interpret the values of the features, especially the negative ones. The
    number of features is an input from the user. }\label{fig:lime}
  \vspace{-0.1in}
\end{figure}

The LIME explanation is shown in Fig.~\ref{fig:lime}. Green values indicate that
the features were used in favor of memory leak, and red values were opposing
memory leak. We keep the number of features in the explanation at the default value
of 10. The first two features are derived from the metrics in our explanation, and
a threshold is given for the features, e.g., the 95\textsuperscript{th}
percentile of page allocations is over 83, which causes this run to be likely to
be a memory leak. Interpreting features such as percentiles, standard deviation
and thresholds on their values is left to the user.
Furthermore, the effect of the red features is unclear, as it is not stated which
class the sample would be if it is not labeled as leak.

It is important to note that both LIME and SHAP use randomly generated data for
the explanations. In doing so, these methods assume that all of the features are
independent variables; however, many features are in fact dependent, e.g.,
features generated from the same metric. Without knowledge of this, these random
data generation methods may test the classifier with synthetic runs that are
impossible to get in practice, e.g., synthetic runs where the
75\textsuperscript{th} percentile of the one metric is lower than the
50\textsuperscript{th} percentile of the same metric. Our method does not
generate synthetic data, and uses the whole time series instead of just the
features, so it is not affected by this.

\subsection{Comprehensibility}
We measure comprehensibility using the number of metrics in the explanation. Our
method returns 2 time series for the qualitative evaluation example in
Fig.~\ref{fig:our_method}, and in most cases the number of time series in our
explanations is below 3; however, for some challenging cases it can reach up to
10. SHAP returns 187 features in Fig.~\ref{fig:shap}, and SHAP explanations
typically have hundreds of features for HPC time series data. LIME requires the
number of features as an input; however, does not provide any guidelines on how
to decide this value.

\subsection {Faithfulness Experiments}

We test whether the explainability methods actually reflect the decision process
of the models, i.e., whether they are faithful to the model. For every data set, we train a
logistic regression model with $L1$ regularization. We change the $L1$
regularization parameter until less than 10 features are used by the classifier.
The resulting classifier uses 5 metrics for HPAS and Cori data, 9 for NATOPS and 8 for
Taxonomist. Because we know the used features, we can rank the explanations
based on precision and recall.

\begin{itemize}
\item \textbf{Recall:} How many of the metrics used by the classifier are in
  the explanation?
\item \textbf{Precision:} How many of the metrics in the explanation are used
  by the classifier?
\end{itemize}

\begin{figure}[t]
  \centering
  \includegraphics[width=\columnwidth]{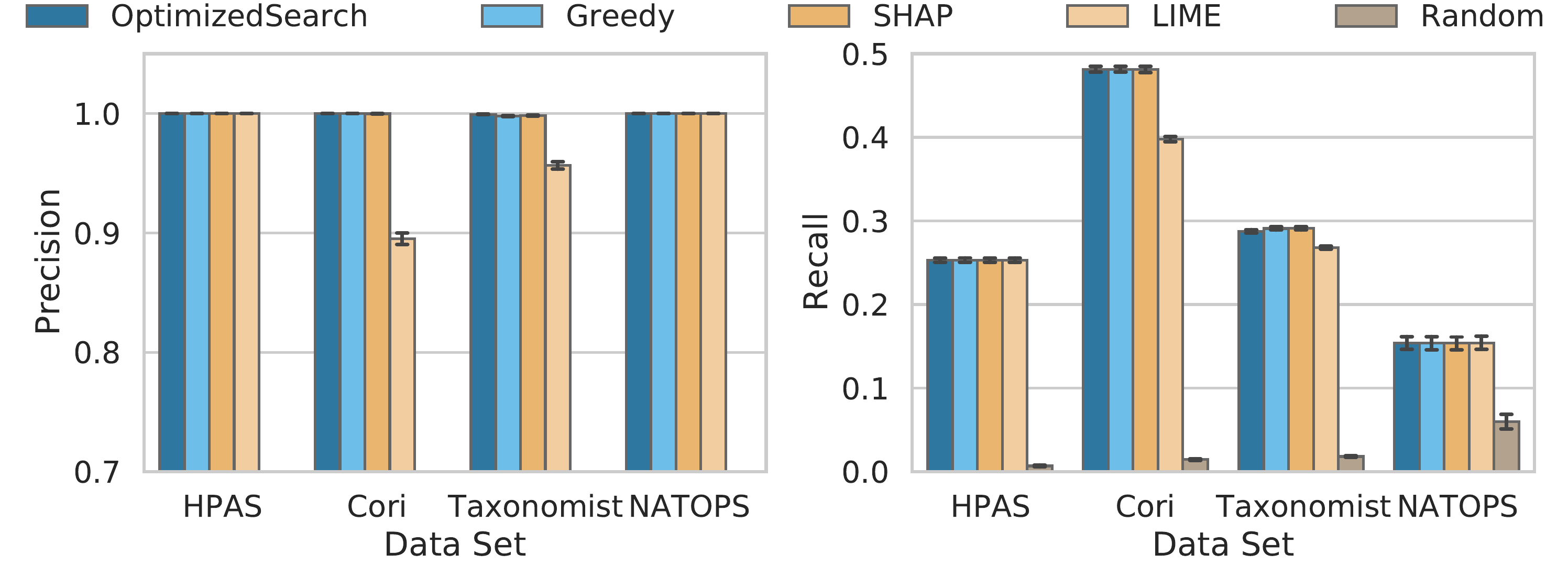}
  \vspace{-0.3in}
  \caption{Precision and recall of the explanations for a classifier with known
    feature importances. Our proposed explainability method (OptimizedSearch and
    Greedy), and SHAP have perfect precision. LIME has lower precision for Cori and
    Taxonomist data sets, which
    indicates that although some features have no impact on the classifier decision,
    they are included in the LIME explanations. The low recall indicates that not
    every feature is used in every local decision.}\label{fig:faithfulness}
  \vspace{-0.1in}
\end{figure}

We acquire explanations for each sample in the test set, and show the average
precision and recall in Fig.~\ref{fig:faithfulness}. To ensure that the
 other explainability methods are not at a disadvantage, we first run the greedy search method
and get the number of metrics in the explanation. Then, we get the same number
of metrics from each method. In this way, as an example, LIME is not adversely affected by
providing 10 features in the explanation even though only 7 are used
by the classifier.

The results show that both our method and SHAP have perfect precision. Recall values of
the explanations are lower than 1 because not every feature in the classifier is
effective for every decision. Notably, LIME has low precision for the
Cori and Taxonomist data sets, which indicates that there may be features in LIME
explanations that are actually not used by the classifier at all. This outcome could be
due to the randomness in the data sampling stage of LIME.

\subsection {Robustness Experiments}

\begin{figure}[t]
  \centering
  \includegraphics[width=\columnwidth]{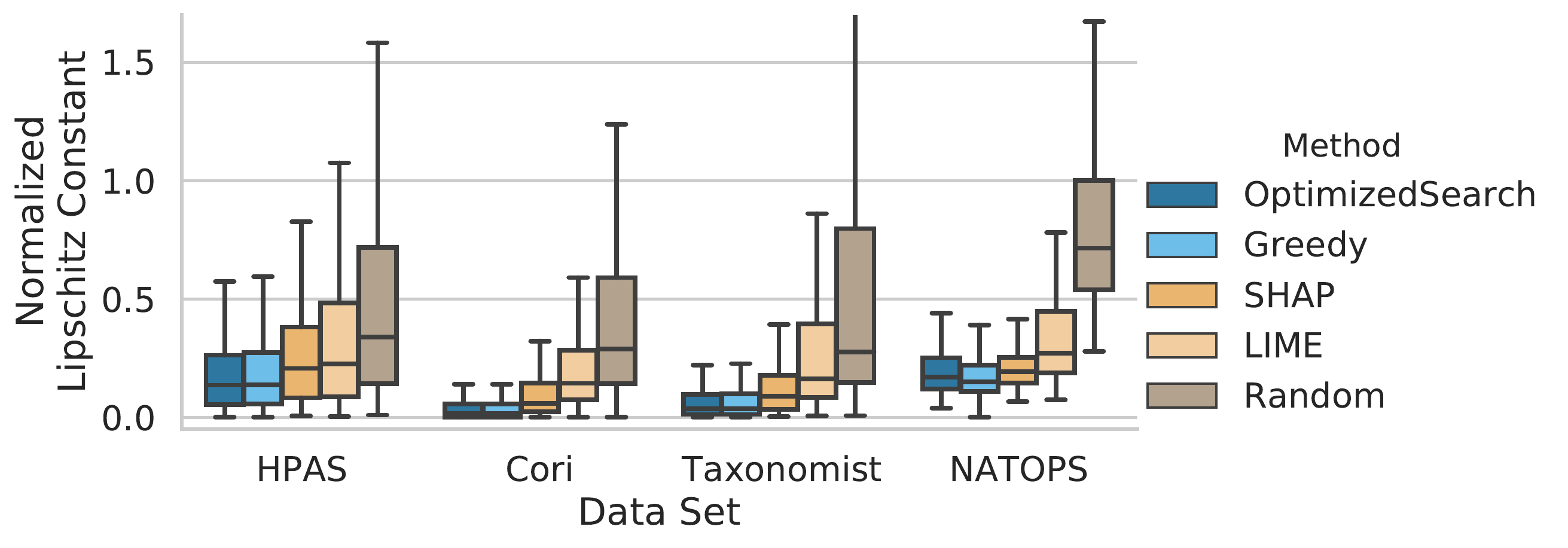}
  \vspace{-0.3in}
  \caption{Robustness of explanations to changes in the test sample.
    Our proposed explainability method (OptimizedSearch and Greedy) is the most
    robust to small changes in the
    input, resulting in more predictable explanations and better user
    experience. Lipschitz constant is normalized to be comparable between
    different data sets, and a lower value indicates better robustness.
    }\label{fig:robustness}
  \vspace{-0.2in}
\end{figure}

\begin{figure}[b]
  \centering
  \vspace{-0.2in}
  \includegraphics[width=0.8\columnwidth]{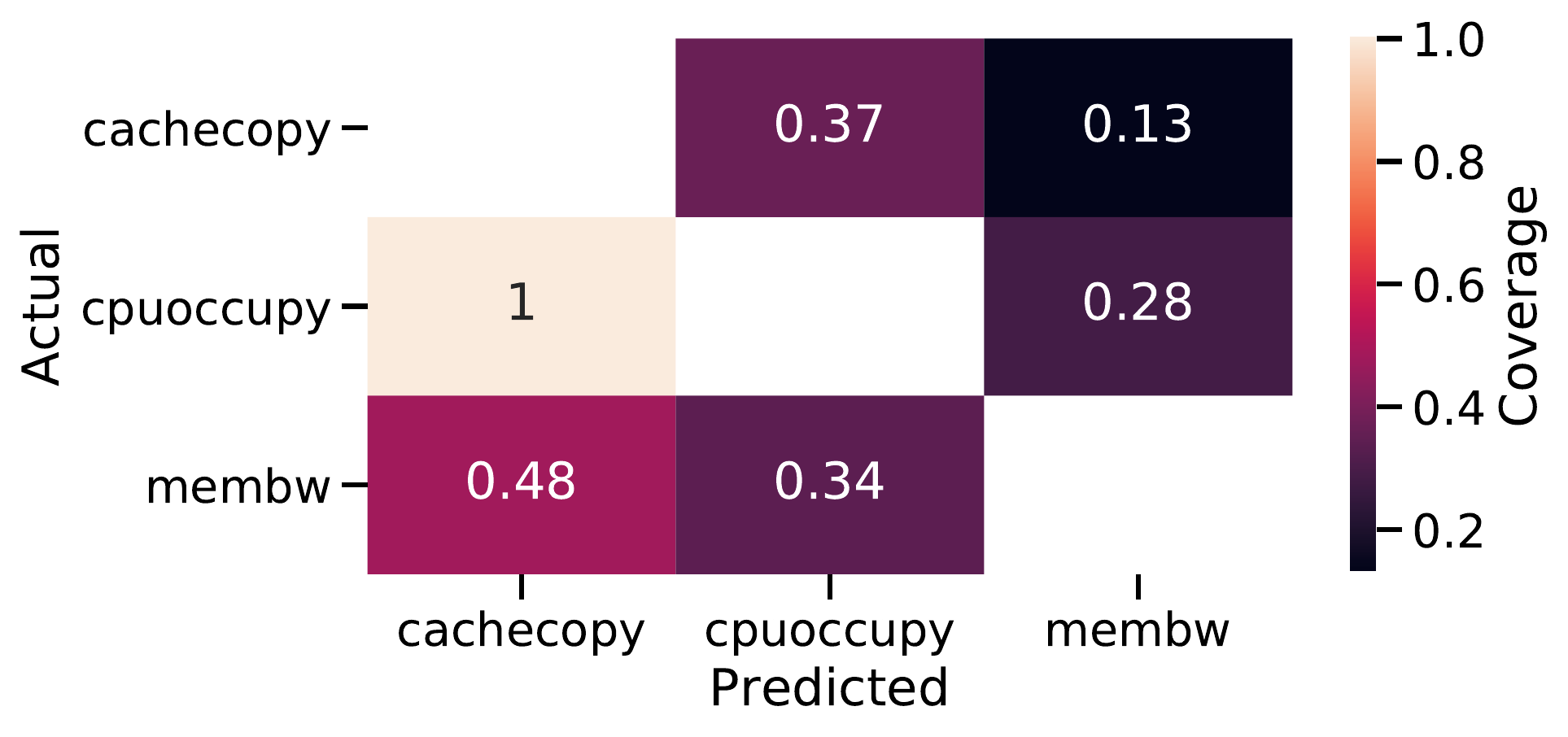}
  \vspace{-0.15in}
  \caption{The ratio of test samples that our explanations are applicable to,
    among samples with the same misclassification characteristics. For the
    cpuoccupy runs that are misclassified as cachecopy, every explanation is
    applicable to every other sample with the same misclassification.
  }\label{fig:generality}
  \vspace{-0.1in}
\end{figure}

For robustness, we calculate the Lipschitz constant~\eqref{eqn:lipschitz}
for each test sample and show average results in
Fig.~\ref{fig:robustness}. According to the results, our method is the most robust
explainability
technique. One reason is that our method does not involve random data
generation for explanations, which reduces the randomness in the explanations.
It is important for explanations to be robust, which ensures that users can
trust the ML models and explanations. For the NATOPS data set, the greedy method
has better robustness compared to optimized search, because the greedy
method inspects every metric before generating an explanation, thus finds the
best metric, while the optimized search can stop after finding a suitable
explanation even if there can be better solutions.

\subsection {Generalizability Experiments}

We test whether our explanations for one $x_{test}$ are generalizable to other
samples. We use the HPAS data set and random forest
classifier with feature extraction. There are 3 classes that are
confused with each other. For each misclassified test instance, we get an
explanation and apply the same metric substitutions using the same distractor to
other test samples with the same (true class, predicted class) pair.

We report
the percentage of misclassifications that the explanation applies to (i.e.,
successfully flips the prediction for) in Fig.~\ref{fig:generality}.
According to our results, on average, explanations for one mispredicted sample are
applicable to over 40\% of similarly mispredicted samples. This shows
that users do not need to manually inspect the explanation for every
misprediction, and instead they can obtain a general idea of the classifiers
error characteristics from a few explanations, which is one of the goals of
explainability.

\begin{figure}[t]
  \centering
  \includegraphics[width=\columnwidth]{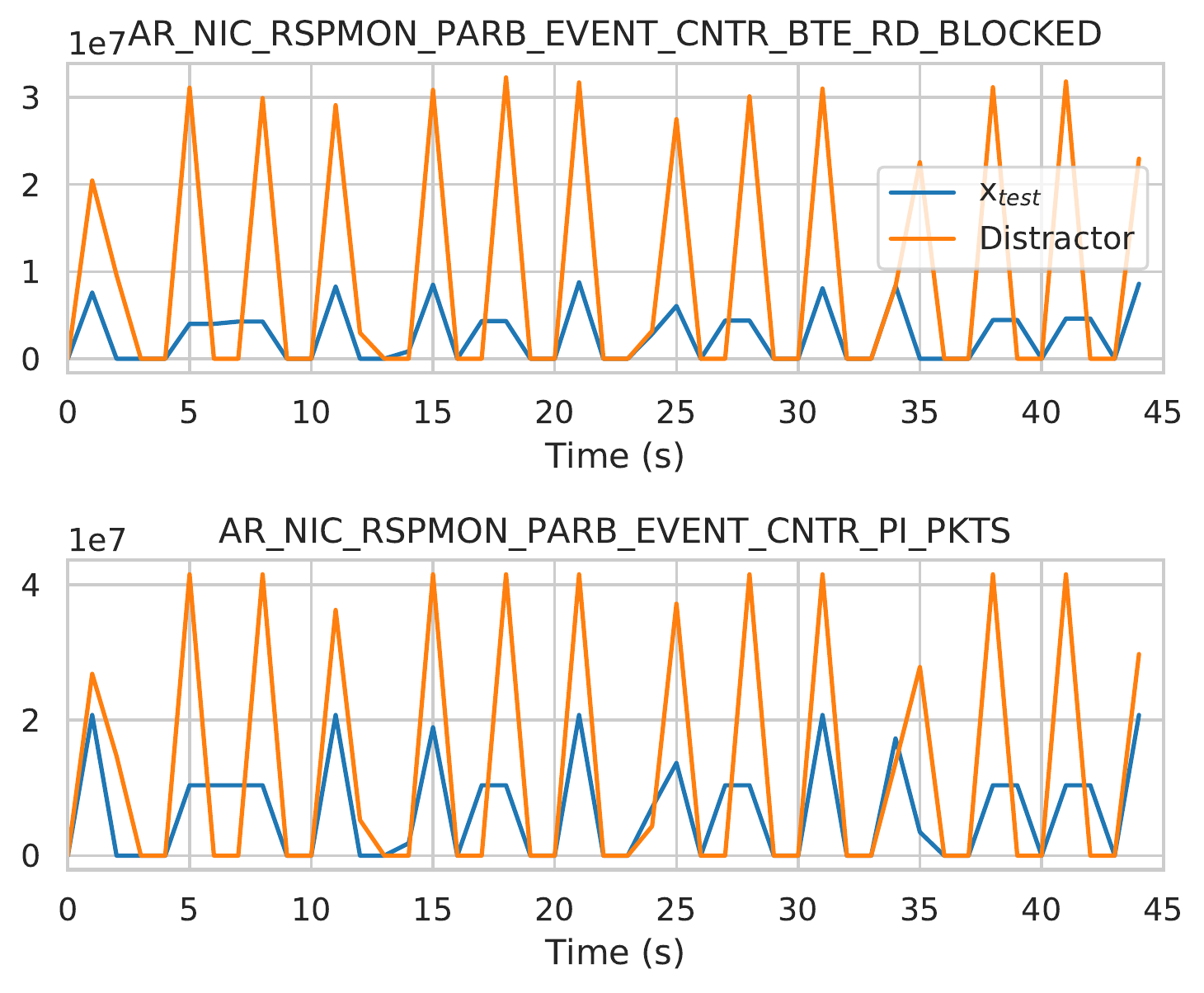}
  \vspace{-0.3in}
  \caption{Our explanation for a ``network'' anomaly misclassified as
    ``healthy.'' The metrics are shown in the $y$-axes and the metric names are
    above the plots. The explanation indicates that the anomaly needs higher
    network traffic to be classified correctly. 4 of the 6 metrics in the
    explanation are omitted because they appear identical to the metrics shown.
  }\label{fig:autoencoder}
  \vspace{-0.2in}
\end{figure}

\subsection{Investigating Misclassifications}
As a demonstration, we debug a misclassified sample using our explainability
method. This is a typical scenario that would be encountered if ML
systems are deployed to production. We train the autoencoder-based
anomaly detection framework~\cite{Borghesi:2019,Borghesi:2019a} using healthy
data from the HPAS data set. Among the
runs with the network anomaly, the run shown in Fig.~\ref{fig:autoencoder} is
misclassified. We explain this misclassification using our explanation method:
we choose the misclassified run as $x_{test}$ and the anomalous class as the
class of interest (the autoencoder has two classes: anomalous
and healthy). We cannot apply the LIME and SHAP baselines
as the autoencoder directly takes time series as input.

\begin{table}[b]
\vspace{-0.2in}
\centering
\caption{Network metric names in explanation. Full names are \newline
``ar\_nic\_(Field 1)\_event\_cntr\_(Field 2)\_(Field 3)''
}\label{tab:metrics}
\begin{tabular}{llll}
\toprule
Metric & Field 1 & Field 2 & Field 3\\
\midrule
1 & RSPMON\_PARB & PI & FLITS\\
2 & RSPMON\_PARB & PI & PKTS\\
3 & RSPMON\_PARB & AMO & FLITS\\
4 & RSPMON\_PARB & AMO & PKTS\\
5 & NETMON\_ORB\ & EQ & FLITS\\
6 & RSPMON\_PARB & BTE\_RD & BLOCKED\\
\bottomrule
\end{tabular}
\end{table}

The explanation includes 6 network metrics from Table~\ref{tab:metrics}.
Metrics 6 and 2 are shown in Fig.~\ref{fig:autoencoder}. Field 3
describes if the metric counts the number of flits or packets. Blocked means a
flit was available at the input but arbitration caused the selection of another
input. Metrics 1 and 2 count traffic being forwarded by the network
interface card (NIC) to the processor, 3 and 4 count the processor memory
read and write traffic resulting from requests received over the network, metric 5
counts traffic injected by the NIC into the network, and metric 6
counts reads of processor memory initiated by the NIC's block transfer engine to
fetch data included in the put requests it is generating~\cite{cray_counters}.

The explanation indicates that the intensity of the network anomaly in this run
needs to be higher, i.e., more network traffic is needed, for this to be
classified as a network anomaly. Furthermore, since 6 metrics all need to be
modified for the prediction to flip, it can be seen that the autoencoder has
learned the parallel behavior of these metrics. For example, if the number of
packets is changed independent of the number of flits, the classifier does not
change its prediction. It is highly unlikely that randomly generated samples
would capture the correlated behavior of these two metrics.


\section{Conclusion and Future Work}\label{sec:eval:explainable}
This paper, for the first time, investigated explainability for ML frameworks
that use multivariate time series data sets, with a focus on the HPC domain.
Multivariate time series data is widely used in many scientific and engineering
domains, and ML-based HPC analysis and management methods that show a lot of
promise to improve HPC system performance, efficiency, and resilience. Being
explainable is an important requirement for any ML framework that seeks
widespread adoption.

We defined the counterfactual time series explainability problem and
presented a heuristic algorithm that can generate feasible explanations. We also
demonstrated the use of our explanation method to explain various frameworks,
and compared with other explainability methods. We identify the following open
problems for future exploration.

\textbf{Minimizing probabilities}
Our problem statement maximizes the prediction probability for the target
class. For binary classification, this is equivalent to minimizing the
probability for the other class, but for multi-class classification, there can be
differences between minimizing and maximizing. We found that in practice both
false positives and true positives
can be explained by maximizing one class' probability, but a similar
explainability method could be designed to minimize class probabilities and get
new explanations.

\textbf{Approximation algorithms}
We have presented a heuristic algorithm; however, we have not provided any
bounds on the optimality of our
algorithm. Finding approximation algorithms with provable bounds is an open problem, as such
algorithms may yield better explanations in a shorter time.

\textbf{Explainable models}
During our experiments, we conducted a preliminary experiment using
CORELS, which is an interpretable model that learns rule lists~\cite{Angelino:2017}.
We trained CORELS using the HPAS data set to
classify the time windows between the 5
anomalies and the healthy class. Regardless of our experimentation with different
configuration options, the model in our experiments took over 24 hours
to train and the resulting rule list classified every time window as
healthy regardless of the input features. Challenges in this direction
include designing inherently explainable models that can give good accuracy for multivariate time
series data.

\textbf{Production systems}
The challenges faced when deploying ML frameworks on HPC systems
can lead to important research questions, and it has been shown that user
studies are critical for evaluating explainability
methods~\cite{Poursabzi:2018}. We hope to work with administrators and deploy
ML frameworks with our explainability method to production HPC systems and
observe how administrators use the frameworks and explanations, and find the
strengths and weaknesses of different approaches.

\appendix[Proof for \NP-Hardness of the counterfactual time series
explainability problem]

In order to prove the \NP-hardness of the counterfactual time series
explainability problem, we first consider a simplified problem of explaining a
binary classifier $f$ by finding a
minimum $A$ such that $f(x') = 1$ for a fixed $x_{dist}$. The proof
we use is similar to the proof by Karlsson et al.~\cite{Karlsson:2018}.

\begin{lemma}
  Given a random forest classifier $f$ that takes $m$ time series of length 1
  as input, a test sample $x_{test}$, and a distractor $x_{dist}$,
  the problem of finding a minimum set of substitutions $A$ such that
  $f(x') = 1$ is \NP-hard.
\end{lemma}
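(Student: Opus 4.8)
The plan is to prove \NP-hardness by a polynomial-time reduction from \textsc{Minimum Set Cover}, which is classically \NP-hard. First I would observe that when the time series have length $1$, each ``metric'' is a single scalar, so $x_{test}, x_{dist} \in \mathbb{R}^m$ and the matrix $A$ simply selects a subset $J = \{ j : A_{j,j} = 1 \}$ of coordinates to overwrite: $x'_j = (x_{dist})_j$ for $j \in J$ and $x'_j = (x_{test})_j$ otherwise, so minimizing $||A||_1$ is exactly minimizing $|J|$. The whole problem then reads: given two points and a forest $f$, switch the fewest coordinates from their $x_{test}$ value to their $x_{dist}$ value so that $f$ outputs class $1$.

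Given a set-cover instance with universe $U = \{u_1,\dots,u_n\}$ and sets $S_1,\dots,S_m \subseteq U$, I would build an instance with $m$ features, identifying feature $j$ with set $S_j$, and set $x_{test} = \mathbf{0}$ and $x_{dist} = \mathbf{1}$, so that switching coordinate $j$ on corresponds exactly to selecting $S_j$. For each element $u_i$ I construct one decision tree $T_i$ that reads, one after another (via splits at $0.5$), the features $\{ j : u_i \in S_j \}$: as soon as it sees a $1$ it falls into a pure leaf predicting class $1$, and if all of them are $0$ it predicts class $0$. This ``OR-gadget'' tree has size $O(\deg u_i)$, so $T_1,\dots,T_n$ are jointly polynomial. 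I then pad the ensemble with $n-1$ trivial trees, each a single leaf always predicting class $0$, and let $f$ be the majority vote over all $2n-1$ trees.

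The correctness hinges on the padding. Votes for class $1$ come only from the $T_i$ and equal the number of covered elements $c \in \{0,\dots,n\}$; with $2n-1$ trees a class wins the majority iff it receives at least $n$ votes, i.e. iff $c = n$. Hence $f(x') = 1$ if and only if all $n$ element-trees vote $1$, i.e. if and only if $J$ covers $U$. A minimum-size $A$ with $f(x') = 1$ therefore yields a minimum set cover, and since $x_{dist} = \mathbf{1}$ selects every set (a feasible cover whenever one exists), a solution always exists. The step I expect to be the crux is precisely this encoding of the conjunction ``every element is covered'' inside a majority-vote forest: a forest natively thresholds at one half, not at unanimity, so a single uncovered element must be made to flip the global decision. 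The dummy-tree padding shifts the effective threshold so that the $n$ OR-gadgets behave like a logical AND; verifying that each gadget is a legitimate polynomial-size decision tree and that switching coordinates maps bijectively onto choosing sets is then routine, completing the reduction and the \NP-hardness claim.
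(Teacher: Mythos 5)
Your reduction is correct and is essentially the paper's own proof: the paper reduces from the hitting set problem (the dual view of your set cover instance, with features playing the role of universe elements and trees the role of sets), likewise using binary length-1 series, $x_{test}=\mathbf{0}$, $x_{dist}=\mathbf{1}$, and one polynomial-size OR-gadget tree per set so that a minimum set of substitutions corresponds to a minimum hitting set. The only substantive difference is that the paper takes $f$ to output the fraction of trees voting for class 1 and requires that fraction to equal exactly 1 (so unanimity comes for free from the problem statement $f(x')=1$), whereas you model $f$ as a majority vote and restore unanimity by padding with $n-1$ always-zero trees---a legitimate refinement that your argument handles correctly.
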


\begin{proof}
  We consider the hitting set problem as follows:
  Given a collection of sets $\Sigma = \{S_1, S_2, \ldots, S_n \subseteq U\}$,
  find the smallest subset $H \subseteq U$ which hits every set in $\Sigma$. The
  hitting set problem is \NP-hard~\cite{Karp:1972}. We enumerate $U$ such that
  each elements maps to a number between 0 and $m$, $U = [0, m]$.

  Assume there is an algorithm to solve our problem that runs in polynomial
  time. We construct a special case of our problem that can be used to solve
  the hitting set problem described above.
  Assume the time series are all of
  length 1 and can have values 0 or 1. Construct a random forest classifier
  $\mathcal{R}(x) = y: \mathbb{R}^{m \times 1} \rightarrow \mathbb{R}$ of $n$
  trees $\mathcal{R} = \{T_1, T_2, \ldots, T_n\}$, $m = |U|$.
  Each tree $T_i$ then classifies the multivariate time series as
  class 1 if any time series of the corresponding subset $S_i \subseteq [0, m]$
  is 1, and classifies as class 0 otherwise. $\mathcal{R}(x)$ thus returns
  the ratio of trees that classify as 1, or the ratio of sets that are covered.
  For $x_{test}$, we use a multivariate time series of all 0s (which will
  be classified as class 0), and as the distractor $x_{dist}$ we use all 1s,
  classified as class 1.

  Our algorithm finds a minimum set of substitutions $A$ such that
  $\mathcal{R}(x') = 1$. We can transform $A$ to the solution of the hitting
  set problem $H$ by adding the $j$\textsuperscript{th} element of $U$ to $H$ if
  $A_{jj} = 1$. Thus, $H \subseteq U$ has minimum size and hits each subset $S_j$,
  i.e., $H \cap S_j \neq \emptyset$ for all $j \in [0, n]$. Thus, we can
  use our algorithm to solve the hitting set problem. Since the hitting set
  problem in \NP-hard, the existence of such a polynomial-time algorithm is unlikely.
\end{proof}

\begin{theorem}
  Given classifier $f$, class of interest $c$, test sample $x_{test}$, and the
  training set $X$ for the classifier, the {\em counterfactual time series
    explainability problem} of finding $x_{dist}$ and $A$ that maximize
  $f_c(x')$ is \NP-hard.
\end{theorem}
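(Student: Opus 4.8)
The plan is to show that the general problem in the theorem contains the restricted problem of the Lemma as a special case, so that \NP-hardness is inherited directly rather than requiring a fresh reduction. The Lemma already establishes hardness for a fixed distractor, a binary random forest, and the task of finding a minimum-size $A$ achieving $f(x') = 1$; the theorem differs only in that it additionally ranges over $x_{dist}$ drawn from the training set $X$ and permits $k$ classes while maximizing $f_c(x')$. First I would observe that these extra degrees of freedom can be neutralized. Given any instance produced by the Lemma's reduction from hitting set, I construct an instance of the theorem's problem with $k = 2$, class of interest $c = 1$, the same random forest $\mathcal{R}$, and training set $X = \{x_{dist}\}$ consisting of the single all-ones sample. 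Because the training set is a singleton, the search over $x_{dist}$ is vacuous, and the only remaining choice is the matrix $A$.

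Next I would connect the objectives. Maximizing $f_c(x')$ in isolation is trivial---taking $A = I_m$ swaps in the entire distractor and attains $f_c(x') = 1$---so the hardness must come from the joint criterion defined by the loss $L$ in~\eqref{eqn:loss}, which trades the probability term against $\lambda \lVert A \rVert_1$. In the random forest construction, $f_c(x') = \mathcal{R}(x')$ equals the fraction of trees (sets) classified as $1$ (covered), so $f_c(x') = 1$ holds exactly when every set $S_i$ is hit. Minimizing $\lVert A \rVert_1$ at this maximal probability then corresponds precisely to finding a minimum hitting set, and the transformation of $A$ into $H$ (adding element $j$ to $H$ whenever $A_{jj} = 1$) is the same bijection used in the Lemma. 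Consequently, any algorithm that solves the theorem's optimum-counterfactual problem on the constructed instance also solves minimum hitting set.

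Finally I would conclude that, since the construction is polynomial in the size of the hitting set instance and minimum hitting set is \NP-hard, the counterfactual time series explainability problem is \NP-hard as well. The main obstacle I anticipate is not the reduction itself---which is essentially a specialization argument layered on top of the Lemma---but stating the objective carefully: I must make explicit that the problem proven hard is the \emph{joint} minimization of the number of substitutions at the maximal attainable probability, as captured by $L$, and not the unconstrained maximization of $f_c(x')$, which would be trivial. Getting this framing precise is what makes the inheritance of hardness from the Lemma rigorous.
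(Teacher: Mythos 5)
Your proposal is correct and follows the same route as the paper: the theorem is obtained by observing that the Lemma's instance is a special case of the general problem, so \NP-hardness is inherited by specialization rather than by a fresh reduction. The paper's own proof is a two-sentence appeal to exactly this. What you add is worthwhile, though. First, you make the neutralization of the distractor search explicit by taking $X = \{x_{dist}\}$, whereas the paper simply asserts that the Lemma covers the case ``without the added problem of choosing a distractor.'' Second---and more substantively---you flag that the theorem's literal objective, maximizing $f_c(x')$ over $x_{dist}$ and $A$, is trivially attained by $A = I_m$ together with any training sample classified as class $c$; the hardness resides in the joint objective of Eqn.~\eqref{eqn:loss}, i.e., minimizing $\lVert A \rVert_1$ while attaining the maximal probability. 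The paper's theorem statement and proof both elide this point and silently rely on the Lemma's ``minimum set of substitutions'' formulation. Your reframing is what makes the inheritance argument rigorous, so it is an improvement on, not a deviation from, the paper's argument.
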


\begin{proof}
  Lemma 1 shows the \NP-hardness for a special case of our problem with random
  forest classifiers, binary time series, binary classification and without the
  added problem of choosing a distractor $x_{dist}$. Based on this, the general
  case with real-valued time series and more complicated classifiers is also
  \NP-hard.
\end{proof}


\begin{small}
\section*{Acknowledgment}
This work has been partially funded by Sandia National Laboratories. Sandia National Labs is a multimission laboratory managed and operated
by National Technology and Engineering Solutions of Sandia, LLC., a wholly owned
subsidiary of Honeywell International, Inc., for the U.S. Department of Energy's
National Nuclear Security Administration under Contract DE-NA0003525.
\vspace{-0.1in}
\end{small}

\bibliographystyle{IEEEtran}
\bibliography{main.bib}

\end{document}